\def\eqref#1{equation~\ref{#1}}
\def\1{\bm{1}}
\def\vtheta{{\bm{\theta}}}
\def\va{{\bm{a}}}
\def\vb{{\bm{b}}}
\def\vp{{\bm{p}}}
\def\vx{{\bm{x}}}
\def\vy{{\bm{y}}}
\def\vz{{\bm{z}}}
\def\valpha{{\bm{\alpha}}}
\def\vomega{{\bm{\omega}}}
\def\vtheta{{\bm{\theta}}}
\DeclareMathAlphabet{\mathsfit}{\encodingdefault}{\sfdefault}{m}{sl}
\SetMathAlphabet{\mathsfit}{bold}{\encodingdefault}{\sfdefault}{bx}{n}
\DeclareMathOperator*{\argmin}{arg\,min}
\definecolor{DarkPink}{rgb}{0.5,0.0,0.18}
\definecolor{DarkGreen}{rgb}{0.1,0.5,0.1}
\definecolor{DarkRed}{rgb}{0.5,0.1,0.1}
\definecolor{DarkBlue}{rgb}{0.1,0.1,0.7}
\definecolor{DarkYellow}{rgb}{.79,.79,0}
\definecolor{codegreen}{rgb}{0,0.6,0}
\definecolor{codegray}{rgb}{0.5,0.5,0.5}
\definecolor{backcolour}{RGB}{245,248,250}
\definecolor{emph}{RGB}{166,88,53}
\definecolor{nightblue}{RGB}{9,49,105}
\definecolor{keywords}{RGB}{207,33,46}
\definecolor{lightpurple}{RGB}{130,81,223}
\lstdefinestyle{mystyle}{
    backgroundcolor=\color{backcolour},   
    commentstyle=\color{codegreen},
    keywordstyle=\color{keywords},
    stringstyle=\color{nightblue},
    basicstyle=\fontsize{6.5}{6.5}\ttfamily,
    breakatwhitespace=true,         
    breaklines=true,                 
    captionpos=b,                    
    keepspaces=true,                 
    numberstyle=\tiny\color{codegray},
    numbersep=2pt,                  
    showspaces=false,                
    showstringspaces=false,
    showtabs=false,                  
    tabsize=2,
    captionpos=t,
    emph={},
    emphstyle={\color{lightpurple}},
    linewidth=0.98\columnwidth,
    frame=tb,    
    xrightmargin=0pt,
    xleftmargin=0.23cm,
    numbers=left,
    aboveskip=0.4cm,
    belowskip=0.4cm,
}
\newtheorem{theorem}{Theorem}
\title{Dynamic Gradient Alignment for Online Data Mixing}
\author{Simin Fan \\
Apple and EPFL \\
\and
David Grangier \\
Apple \\
\and 
Pierre Ablin \\
Apple \\
}
\begin{document}

\maketitle
\begin{abstract}
The composition of training data mixtures is critical for effectively training large language models (LLMs), as it directly impacts their performance on downstream tasks. Our goal is to identify an optimal data mixture to specialize an LLM for a specific task with access to only a few examples. Traditional approaches to this problem include ad-hoc reweighting methods, importance sampling, and gradient alignment techniques.
This paper focuses on gradient alignment and introduces Dynamic Gradient Alignment (DGA), a scalable online gradient alignment algorithm. DGA dynamically estimates the pre-training data mixture on which the models' gradients align as well as possible with those of the model on the specific task.
DGA is the first gradient alignment approach that incurs minimal overhead compared to standard pre-training and outputs a competitive model, eliminating the need for retraining the model. Experimentally, we demonstrate significant improvements over importance sampling in two key scenarios: (i) when the pre-training set is small and importance sampling overfits due to limited data; and (ii) when there is insufficient specialized data, trapping importance sampling on narrow pockets of data.
Our findings underscore the effectiveness of gradient alignment methods in optimizing training data mixtures, particularly in data-constrained environments, and offer a practical solution for enhancing LLM performance on specific tasks with limited data availability.
\end{abstract}

\section{Introduction}
Large Language Models (LLMs) are typically pre-trained on extensive, generic corpora sourced from a variety of data domains \citep{brown2020languagemodelsfewshotlearners,touvron2023llama,zhang2022opt}, with the composition of these corpora often depending on domain availability or heuristics \citep{gao2020pile,together2023redpajama}. 
While the diversity of natural texts allows the model to learn from various knowledge sources, not all data domains are equally beneficial according to the targeted tasks. The uncurated nature of web-crawled contents could lead to sub-optimal outcomes due to the variations in data quality \citep{longpre2023pretrainer}. Plus, some domains may contain misinformation and biases, as one potential source of hallucinations in language generation \citep{lin2022truthfulqameasuringmodelsmimic,huang2023surveyhallucinationlargelanguage}. 

To better generalize to the downstream target tasks, it is critical to identify the most beneficial pretraining subset from large, generic pretraining corpora. While data selection per sample can be costly, \textit{domain reweighting} offers an efficient group-level selection approach. Domain reweighting methods assume that samples from the same domain share similar features and search for optimal sampling weights across \textit{domains} \citep{xie2023doremi, fan2024doge, liu2024regmix, kang2024autoscale,grangier2024specialized}.
The domains that most positively impact the target tasks should be assigned higher weights.

In this work, on top of a large, generic pretraining corpus, we assume we have access to a few examples representative of the downstream task on which we want the model to generalize, a so-called \emph{specialized set}. 
For this setup, \citet{grangier2024specialized} recently proposed a simple and scalable \emph{importance sampling} based method to domain reweighting, where the weight of a domain is given by the frequency of samples in the specialized set closest to the domain, where distance is measured with SentenceBert~\citep{reimers2019sentenceBERT} embeddings. This method determines the domain weights before any training and is model-agnostic.

Likewise, prior gradient-alignment methods determine a \textbf{\textit{static}} domain weights for large-scale LM training, often relying on a small-scale proxy model \citep{xie2023doremi, fan2024doge} or fitting a scaling law \citep{liu2024regmix, kang2024autoscale}. While these methods show improvements over training on the natural distribution of a generic corpus, they do not dynamically update domain weights during training to adapt to the current model state. In practical training scenarios, a large model may quickly overfit on certain domains with high weights. In such cases, an online weighting method can respond by shifting emphasis to other domains.

We propose Dynamic Gradient-Alignment (\textsc{DGA}), an \textbf{\textit{online}} domain reweighting method that estimates step-wise optimal domain weights during model training. 
Inspired by \textsc{DoGE} \citep{fan2024doge}, at each reweighting step, \textsc{DGA} upweights the data domain whose gradient aligns more with the model's gradient on the specific set. From the optimization perspective, training the model on the most-aligned data domain yields the greatest reduction in the targeted loss.
By incorporating an exponential-moving-average (EMA) term in online domain weights updates, \textsc{DGA} effectively mitigates overfitting and prioritizes the domains that currently benefit the target task the most.
Since the domain weights and model parameters are updated concurrently, inaccurate domain weights can potentially drive the model into suboptimal states, which further leads to snow-balled errors. In such cases, the EMA term serves as a correction factor, guiding the model back to a more stable state. 
As an additional contribution, we scale the domain reweighting methods into extremely fine-grained domains (e.g. $262k$ domains) by introducing a novel \textit{distribution reweighting mechanism}. Rather than directly reweighting $262k$ data domains, distribution reweighting reparameterizes the high-dimensional domain weights as a convex combination of weight vectors derived from a set of distributions estimated from embedding-based importance sampling~\citep{grangier2024specialized}. 
With the number of distributions less than the number of training data domains, it allows \textsc{DGA} to scale to thousands of domains and make the most of the fine-grained group-level features.

Our experiments demonstrate the effectiveness of DGA compared to standard pre-training and importance sampling baselines in two challenging cases: (1) the resource of training tokens in each domain is limited instead of infinite (\S~\ref{sec:token_limit}), and (2) the domain granularity is extremely large, which introduces intractable computation overheads on the domain reweighting problem (\S~\ref{sec:expe_fine_domains}).

\section{Data Mixing with Specialized Target}
\subsection{Generic dataset and specific tasks}

We consider a generic training corpus $D_{\mathrm{gen}} = \{D_1, \hdots, D_k \}$, which is partitioned into $k$ distinct data domains.
We can sample from each of the $k$ domains to train a model. 
Consequently, we can sample from a \emph{mixture} of these domains and draw a batch of data following the law $\vx \sim \mathrm{mix}(\valpha)\triangleq \sum_{i=1}^k\alpha_iD_i$, where $\valpha\in\mathbb{R}^k$ is the mixture \emph{weights},  belonging to the \emph{simplex} $\valpha\in \Delta^k \triangleq \{\valpha\in\mathbb{R}^k| \sum_{i=1}^k \alpha_i = 1 \text{ and } \alpha_i \geq 0 \text{ for all }i\}$. 
Here, getting one sample from $\mathrm{mix}(\valpha)$ means first getting a random index $i\in\{k\}$ from the categorical distribution corresponding to the vector of probabilities $\valpha$, and then outputting a random sample from the domain $D_i$.
Sampling from this law is computationally efficient if we can efficiently sample from each domain.
Next, we consider a model, parameterized by $\vtheta\in\mathbb{R}^p$, and a loss function $\ell(\vtheta, \vx)$ defined for $\vx\in D_{\mathrm{gen}}$.
To simplify notation, given a set of samples $S$ (which can be  either a full dataset $D_i$, or a mini-batch), we denote the average over $S$ of the loss $\ell(\vtheta, S)  \triangleq\frac{1}{\#S}\sum_{\vx\in S}\ell(\vtheta,\vx)$.
Since we focus on LLMs, $\ell$ is typically the next-token-prediction loss.
For a given mixture weight $\valpha$, we can update $\vtheta$ by doing optimization steps on the \emph{generic loss}
\begin{align}
    \label{eq:generic_loss}
    L_{\mathrm{gen}}(\vtheta, \valpha) \triangleq \mathbb{E}_{\vx\sim \mathrm{mix}(\valpha)}[\ell(\vtheta, \vx)] = \sum_{i=1}^k \alpha_iL_i(\vtheta)\text{ with }L_i(\vtheta)\triangleq\ell(\vtheta, D_i)
\end{align}
In this paper, our goal is to use this data-mixture to train a model that performs well a \emph{specific} task.
We assume to have access to samples from this task, split into train and test sets. 
We call the train set the \emph{specific dataset} $D_{\mathrm{spe}}$ that we use to train models.
The performance on the specific set is measured with the \emph{specific loss}
\begin{equation}
    \label{eq:specific_loss}
    L_{\mathrm{spe}}(\vtheta) \triangleq \ell(\vtheta, D_{\mathrm{spe}}).
\end{equation}
We assume that the specific set $D_{\mathrm{spe}}$ is small; hence, optimizing $L_{\mathrm{spe}}$ directly leads to overfitting: the loss on the test data would be much higher than $L_{\mathrm{spe}}$.
Instead, to get to a low specific loss, we aim to find the optimal data mixing across $k$ data domains $\valpha$ at each training step to get a good model while training on the reweighted generic distribution $\mathrm{mix}(\valpha)$. 

The target specialization task can be flexible according to the application domains, ranging from reasoning, instruction following, etc., corresponding to various objective loss functions, including the next-token prediction loss and preference-based losses when applied on pair-wise datasets.
In this paper, we focus on next-token prediction on another dataset.
Next, we introduce a general bilevel formulation of the data mixing problem.

\subsection{Bilevel formulation}
Since the lack of data forbids optimizing directly $L_{\mathrm{spe}}$, we look for the mixture $\valpha$ such that optimizing the generic loss $L_{\mathrm{gen}}(\vtheta, \valpha)$ yields the smallest specific loss~\citep{grangier2023adaptive}. 
This is formalized by the following \emph{bilevel optimization} problem~\citep{bracken1973mathematical,dagreou2022framework}:
\begin{equation}\label{equ:bilevel-form}
    \valpha^{\star} \in \argmin_{\valpha \in \Delta^k} L_{\mathrm{spe}}(\vtheta^{\star}(\valpha)), \text{ such that }\vtheta^{\star}(\valpha) \in \argmin_{\vtheta} L_{\mathrm{gen}}(\vtheta, \valpha)
\end{equation}
This bilevel formulation is intuitive: for a given weight $\valpha$, the parameters obtained by minimizing the generic loss $L_{\mathrm{gen}}$ are $\vtheta^*(\valpha)$, and we want those weights to yield a small specific loss.
Notably, if the specific loss is a mixture of generic data with an unknown weight $\tilde{\valpha}$, the bilevel formulation is guaranteed to recover it. In other words:
\begin{theorem}
    Assume that there exists $\tilde{\valpha}$ such that $D_{\mathrm{spe}} = \mathrm{mix}(\tilde{\valpha})$
    . Then, $\tilde{\valpha}$ is a solution to the bilevel problem in \autoref{equ:bilevel-form}.
\end{theorem}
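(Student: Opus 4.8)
The plan is to observe that the hypothesis $D_{\mathrm{spe}} = \mathrm{mix}(\tilde{\valpha})$ collapses the bilevel problem into an ordinary minimization, by identifying the specific loss with one particular instance of the generic loss. First I would unfold the definitions in \autoref{eq:specific_loss} and \autoref{eq:generic_loss}: since drawing $\vx \sim \mathrm{mix}(\tilde{\valpha})$ means sampling a domain index $i$ with probability $\tilde{\alpha}_i$ and then a point from $D_i$, averaging $\ell(\vtheta, \cdot)$ over $D_{\mathrm{spe}} = \mathrm{mix}(\tilde{\valpha})$ is exactly $\mathbb{E}_{\vx \sim \mathrm{mix}(\tilde{\valpha})}[\ell(\vtheta, \vx)]$. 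Hence for every $\vtheta$,
\[
L_{\mathrm{spe}}(\vtheta) = \mathbb{E}_{\vx \sim \mathrm{mix}(\tilde{\valpha})}[\ell(\vtheta, \vx)] = L_{\mathrm{gen}}(\vtheta, \tilde{\valpha}) = \sum_{i=1}^k \tilde{\alpha}_i L_i(\vtheta).
\]
This single identity is the crux; everything after it is a soft optimization comparison.

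Given the identity, the second step evaluates the inner problem at $\valpha = \tilde{\valpha}$. By definition $\vtheta^{\star}(\tilde{\valpha}) \in \argmin_{\vtheta} L_{\mathrm{gen}}(\vtheta, \tilde{\valpha})$, and since $L_{\mathrm{gen}}(\cdot, \tilde{\valpha})$ and $L_{\mathrm{spe}}(\cdot)$ are the same function of $\vtheta$, this reads $\vtheta^{\star}(\tilde{\valpha}) \in \argmin_{\vtheta} L_{\mathrm{spe}}(\vtheta)$. In particular the outer objective attains its global floor at $\tilde{\valpha}$, namely $L_{\mathrm{spe}}(\vtheta^{\star}(\tilde{\valpha})) = \min_{\vtheta} L_{\mathrm{spe}}(\vtheta)$.

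For the third step I would lower-bound the outer objective at an arbitrary competitor $\valpha \in \Delta^k$. For any inner solution $\vtheta^{\star}(\valpha)$, the point $\vtheta^{\star}(\valpha)$ is merely some element of $\mathbb{R}^p$, so trivially $L_{\mathrm{spe}}(\vtheta^{\star}(\valpha)) \geq \min_{\vtheta} L_{\mathrm{spe}}(\vtheta) = L_{\mathrm{spe}}(\vtheta^{\star}(\tilde{\valpha}))$. Thus no choice of $\valpha$ drives the outer objective below the value achieved by $\tilde{\valpha}$, which yields $\tilde{\valpha} \in \argmin_{\valpha \in \Delta^k} L_{\mathrm{spe}}(\vtheta^{\star}(\valpha))$, i.e.\ $\tilde{\valpha}$ solves \autoref{equ:bilevel-form}.

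I do not expect a genuine obstacle: the result is essentially immediate once the loss identity is in place. The only point demanding care is interpretational, namely that $D_{\mathrm{spe}} = \mathrm{mix}(\tilde{\valpha})$ must be read as an equality of the underlying sampling distributions, so that empirical averages over $D_{\mathrm{spe}}$ coincide with expectations under the mixture, rather than as a set equality of finite sample collections. I would also flag that the argument shows $\tilde{\valpha}$ is only \emph{a} minimizer, not necessarily the unique one: if several weight vectors induce the same mixture distribution, or if distinct domains share an optimal parameter, the outer problem may admit multiple solutions, but each attains the same global floor $\min_{\vtheta} L_{\mathrm{spe}}(\vtheta)$.
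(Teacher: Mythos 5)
Your proof is correct and follows essentially the same route as the paper's: both hinge on the identity $L_{\mathrm{spe}}(\cdot) = L_{\mathrm{gen}}(\cdot, \tilde{\valpha})$, conclude that $\vtheta^{\star}(\tilde{\valpha})$ is a global minimizer of $L_{\mathrm{spe}}$, and then lower-bound the outer objective at any competitor $\valpha$ by that global minimum. Your closing remarks on reading $D_{\mathrm{spe}} = \mathrm{mix}(\tilde{\valpha})$ as an equality of distributions and on non-uniqueness of the solution are sensible additions but do not change the argument.
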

\begin{proof}
    We let $\tilde{\vtheta}$ the minimizer of $L_{\mathrm{spe}}$. Then, for all $\valpha$, we have by definition that $L_{\mathrm{spe}}(\vtheta^*(\valpha))\geq L_{\mathrm{spe}}(\tilde{\vtheta})$. Furthermore, since $D_{\mathrm{spe}} = \mathrm{mix}(\tilde{\valpha})$, we have that $L_{\mathrm{gen}}(\vtheta, \tilde{\valpha})=L_{\mathrm{spe}}(\vtheta)$ for all $\vtheta$, hence minimizing this yields $\vtheta^*(\valpha) = \tilde{\vtheta}$. Putting these results together, we have proven that for all $\valpha$, it holds $L_{\mathrm{spe}}(\vtheta^*(\valpha))\geq L_{\mathrm{spe}}(\vtheta^*(\tilde{\valpha}))$, so that $\tilde{\valpha}$ is a solution to \autoref{equ:bilevel-form}.
\end{proof}

We consider two types of methods to solve \autoref{equ:bilevel-form}. Static methods construct a single mixture weight vector $\valpha$ and then minimize $L_{\mathrm{gen}}(\vtheta, \valpha)$; 
we describe in the next section how to obtain this vector $\valpha$. 
Online methods modify the weights dynamically during model training. They
produce a sequence of weights $\valpha^{(t)}$ where $t$ is the optimization iterate. In that case, at each training step, the parameters $\vtheta^{(t)}$ are updated by doing an optimization step --- with gradient descent or Adam --- on the function $L_{\mathrm{gen}}(\vtheta, \valpha^{(t)})$.
We now discuss methods to obtain a weight vector $\valpha$ or a sequence $\valpha^{(t)}$.

\subsection{A Strong Baseline: Importance Sampling}
\label{sec:importance_sampling}
A sensible strategy is to train the model on a data mixture that most resembles the composition of the targeted specialization data distribution. 
This is the philosophy behind importance sampling~\citep{kloek1978bayesian}.
We estimate the importance sampling weights $\valpha^{\mathrm{IS}}$ using the method of \citet{grangier2024specialized}.
The core idea is to embed each generic domain using SentenceBert~\citep{reimers2019sentenceBERT}, and then compute the centroid of each domain $\vb_i = \frac1{\#D_i}\sum_{\vx\in D_i}\mathrm{Bert}(\vx)$. This defines a simple and cheap to compute selection function $c(\vx) \in \{1\dots k\}$, assigning $\vx$ to its closest centroid, i.e.,  $c(\vx) = \argmin_i\|\mathrm{Bert}(\vx) - \vb_i\|$ for $\vx\in D_{\mathrm{gen}}\cup D_{\mathrm{spec}}$. We use it to predict the closest generic data domain for each data instance from the specific set. The importance sampling weights are obtained as the ratio of examples falling in each bin:
\begin{equation}\label{equ:IS-weights}
    \valpha^{\mathrm{IS}}_i \triangleq \frac{\#\{\vx\in D_{\mathrm{spe}}| c(\vx) = i\}}{\# D_{\mathrm{spe}}}
\end{equation}
One of the main advantages of this method is its simplicity: the computation of the weights $\valpha^{\mathrm{IS}}$ is decoupled from model optimization and can be performed before training. 
It is expected to work well when the specialization set can be well approximated by the reweighted generic set, i.e., when $
    L_{\mathrm{spe}}(\vtheta)\simeq L_{\mathrm{gen}}(\vtheta, \valpha^{\mathrm{IS}}).
$
When this is not the case, it might not lead to a good specific loss.
Another potential issue with this method arises when it assigns a large weight to a generic domain $D_i$ with little available data.
In this case, training a model on $\mathrm{mix}(\valpha^{\mathrm{IS}})$ will overfit on that domain $D_i$, and it would have been better to reduce the weight of that domain to mitigate overfitting.
A last issue arises when the number of specific examples, $\#D_{\mathrm{spe}}$, is significantly smaller than the number of domains $k$. In this situation, the importance weights become sparse, as they can have at most $\#D_{\mathrm{spe}}$ non-zero coefficients. This sparsity could be problematic, as some domains with zero weights might still be close to $D_{\mathrm{spe}}$.
We illustrate these shortcomings in our experiments and explain how gradient alignment methods --- which we introduce next --- overcome them.

\subsection{DGA: Dynamic Gradient Alignment} 
\textbf{Algorithm.} We introduce the DGA: Dynamic Gradient Alignment method for data reweighting to approximately solve the bilevel problem in \autoref{equ:bilevel-form}.
This algorithm builds upon DoGE~\citep{fan2024doge} and we give a precise account of their differences later.
DGA keeps track of the model's parameters $\vtheta^{t}$ and dynamic weights $\valpha^{t}$.
Once every $T_r$ steps, we compute the gradient alignments $\va^t$, by doing 
\begin{equation}
\label{eq:alignments}
 \va^t_i =\langle \nabla \ell(\vtheta^t, \vx_i), \nabla \ell(\vtheta^t, \vz)\rangle \text{ where } \vx_i\sim D_i 
    \text{ and }\vz\sim D_{\mathrm{spe}}.
\end{equation}
and update the weights by mirror descent on the simplex~\citep{beck2003mirror} with step $\eta>0$:
\begin{equation}
\label{eq:update_mirror}
        \valpha^{t+1} = \frac{\hat{\valpha}}{\sum_{i=1}^k \hat{\valpha}_i}\text{ where }\hat{\valpha} = \valpha^t\odot \exp(\eta \va^t)
\end{equation}
We optionally store an EMA version of the weights $\valpha^t$ parameterized by $\beta \in [0, 1]$ to stabilize the training dynamics of the model's parameters, and define $\valpha_{\mathrm{EMA}}^{t+1} = (1-\beta) \valpha^t_{\mathrm{EMA}}+\beta \valpha^{t+1}$.
Finally, at each step, we update the model's parameters $\vtheta^t$ by doing an optimization step on $L_{\mathrm{gen}}(\vtheta, \valpha^t_{\mathrm{EMA}})$. The full algorithm pseudo-code is given in \autoref{alg:dga}.

\textbf{Rationale.} This algorithm can be seen as a heuristic to solve the bilevel problem in \autoref{equ:bilevel-form}.
Indeed, each update on $\vtheta$ optimizes the inner loss. 
The update rule on $\valpha$ can be seen as a mirror-descent step on $L_{\mathrm{spe}}(\vtheta^*(\valpha))$ with several approximations. The first approximation consists of approximating the solution of the inner problem with one gradient descent step with step-size $\rho$: $\vtheta^*(\valpha)\simeq \vtheta^t - \rho \sum_{i=1}^k\valpha_i\nabla L_i(\vtheta^t)$.
We then approximate the specific loss at $\vtheta^*$ by the post-update specific loss function: $L_{\mathrm{spe}}(\vtheta^*(\valpha))\simeq f(\valpha, \rho) \triangleq L_{\mathrm{spe}}(\vtheta^t - \rho \sum_{i=1}^k\valpha_i\nabla L_i(\vtheta^t)) $, that is, the drop on the specific loss after an update.
When the step size $\rho$ is small, a Taylor expansion gives
\begin{equation}
    f(\valpha, \rho) = L_{\mathrm{spe}}(\vtheta^t) - \rho \sum_{i=1}^k \valpha_i\langle \nabla L_i(\vtheta^t), \nabla L_{\mathrm{spe}}(\vtheta^t)\rangle +o(\rho)
\end{equation}
Similarly, we get that the gradient of $f$ is the gradient alignment:
\begin{equation}
    \frac{\partial f}{\partial \valpha_i}(\valpha, \rho) = -\rho \langle \nabla L_i(\vtheta^t), \nabla L_{\mathrm{spe}}(\vtheta^t)\rangle  + o(\rho)
\end{equation}
We want to use this gradient of $f$ to implement a mirror-descent method. Unfortunately, the gradients involved in the alignment are full-batch, so we approximate them with stochastic gradients obtained from mini-batches, yielding the alignments $\va^t$ from \autoref{eq:alignments}.
Overall, we get the approximation $\nabla_\valpha L_{\mathrm{spe}}(\vtheta^*(\valpha)) \simeq - \rho \va^t$; and the update rule in  \autoref{eq:update_mirror} is a mirror descent step with this approximated gradient and step $\eta /\rho$.

We have explained the link between our algorithm and the bilevel problem in \autoref{equ:bilevel-form}.
Proofs showing convergence of our method require assumptions violated in practice, e.g. 
most theoretical work assumes that the function $\vtheta\to L_{\mathrm{gen}}(\vtheta, \valpha)$ is 
convex~\citep{ghadimi2018approximation,arbel2021amortized,dagreou2022framework}. Nevertheless, successful
applications of related bilevel algorithms to non-convex neural networks have been reported recently~\citep{fan2024doge,grangier2023adaptive}.
\begin{algorithm}[ht!]
   \caption{Dynamic Gradient Alignment method}
   \label{alg:dga}
\begin{algorithmic}[1]
   \State {\bfseries Input:} Generic domains $D_1, \dots, D_k$, specific set $D_{\mathrm{spe}}$, inner optimizer state $\vomega^0$, optimizer function $\texttt{Optimizer}$ such as Adam or SGD, initial weights $\valpha^0$, outer learning rate $\eta$, EMA parameter $\beta$, weight update frequency $T_r$
   \vspace{0.2em}
   \State \textbf{Initialize EMA weights}: $\valpha_{\mathrm{EMA}}^0=\valpha^0$
   \vspace{0.2em}
   \For{$t = 0 \dots T$}
    \vspace{0.2em}
        \State Sample a batch from EMA generic mixture: $\vx \sim \mathrm{mix}(\valpha_{\mathrm{EMA}}^t)$
        \vspace{0.2em}
        \State Update the parameters $\vtheta^{t+1}, \vomega^{t+1} \leftarrow \texttt{Optimizer}(\vtheta^t, \vomega^t, \nabla_{\vtheta} \ell(\vtheta^t, \vx))$
        \vspace{0.2em}
        \If{$t \% T_r = 0$}
            \vspace{0.2em}
            \State Sample a batch from each domain: $\vx_i\sim D_i$ for $i=1\dots k$ and $\vy \sim D_{\mathrm{spe}}$
            \vspace{0.2em}
            \State Compute gradient alignements $\va^t_i\leftarrow \langle \nabla \ell(\vtheta^{t+1}, \vx_i), \nabla \ell'(\vtheta^{t+1}, \vy)\rangle$
            \vspace{0.2em}
            \State Update instantaneous weights: $\valpha^{t+1} \leftarrow\frac{\hat{\valpha}}{\sum_{i=1}^k \hat{\valpha}_i} $ with $\hat{\valpha} = \valpha^t\odot \exp(-\eta \va^t)$
            \vspace{0.2em}
            \State Update EMA weights: $\valpha_{\mathrm{EMA}}^{t+1} \leftarrow (1-\beta) \valpha_{\mathrm{EMA}}^{t} + \beta \valpha^{t+1}$
            \vspace{0.2em}
        \Else{}
            \vspace{0.2em}
            \State Do nothing: $\valpha_{\mathrm{EMA}}^{t+1} \leftarrow \valpha_{\mathrm{EMA}}^{t}$, and $\valpha^{t+1}\leftarrow \valpha^{t}$
            \vspace{0.2em}
        \EndIf
   \EndFor
   \State \textbf{Return} Optimized parameters $\vtheta^{(T)}$ and weights trajectory $\valpha^t, t=0\dots T$
\end{algorithmic}
\end{algorithm}

\textbf{Computational cost and memory overhead.} 
The computation cost of DGA is compared to the cost of a regular pre-training run.
For a base run iteration, the main cost is $t_g$, the cost of computing a gradient with a mini-batch $B$.
For DGA, we need to add the cost of updating the domain weights $\valpha$, which only happens every $T_{r}$ iterations. This update requires computing the $k+1$ gradients (one per domain, one for $L_{\mathrm{spe}}$). 
Hence the average cost of one iteration of DGA is $(1 + (k+1)T_{r}^{-1})t_g$. 
Therefore, DGA's compute overhead is small when $T_r$ is large compared to the number of domains $k$. 

During training, the memory is essentially used by the optimizer state, the model gradients and its activations. 
For simplicity, we assume the same precision for storing all vectors.
The optimizer state (the model parameters and the two EMA terms for Adam) and the gradients have a storage cost of $4m_g$, where $m_g$ denotes the cost of storing the model parameters. 
The cost of storing the activations during backpropagation is $m_b$. Regular pretraining with Adam therefore costs $4m_g + m_b$. DGA computes the required gradients sequentially and does not require more memory to store activations. It simultaneously stores two gradients instead of one (one domain gradient and one specific gradient): DGA, therefore, costs $5m_g + m_b$. This means that DGA memory overhead ranges from 0 (when $m_b \gg m_g$) to $25\%$ (when $m_g \gg m_b$).
DGA scales in terms of computational cost and memory.

\textbf{Comparison with \textsc{DoGE}.} 
While our method is heavily inspired by DoGE~\citep{fan2024doge}, there are several key differences. 
First, DGA samples from the mixture: the weights $\vtheta^t$ are updated using samples drawn from the mixture $\mathrm{mix}(\valpha^t)$, with the gradient $\nabla \ell(\vtheta^t, \vx)$ where $\vx\sim \mathrm{mix}(\valpha^t)$; this is the same gradient that one would use during pre-training with weight $\valpha^t$. 
In contrast, DoGE's weights are updated using a reweighted gradient $\sum_{i=1}^k\valpha^t_i\nabla \ell(\vtheta^t, \vx_i)$, where each $\vx_i$ are drawn from the domain $D_i$. 
For a fixed number of samples available at each draw, DGA's gradient estimate has a lower variance~\citep{seiffert2008resampling}. 
As explained above, DGA has a small overhead compared to regular pre-training, while DoGE updates the weights at each iteration.
These two key differences mean that DGA is much closer to regular pre-training than DoGE. For instance, DGA never requires retraining a model from scratch using the mixture weights estimated from a previous run, while this is the costly strategy used for DoGE.
Finally, the EMA strategy described above is novel.
\section{Experiments}
Our experiments focus on two challenging cases. 
First, given \textit{limited token resources} within each training domain, the model would risk overfitting with weights concentrated on a few domains. Second, given \textit{large number of training domains}, applying \textsc{DGA} on domain reweighting could introduce intractable computation overheads linearly increasing according to the domain granularity. 

\textbf{Generic Datasets and Domains.} For all the experiments, we use \texttt{Redpajama-v2} \citep{together2023redpajama} as the generic training set $D_{\mathrm{gen}}$. This is one of the largest public corpus for LLM pretraining. \texttt{Redpajama-v2} contains 30 trillion filtered and deduplicated tokens from web-crawled dumps. Since this corpus does not come pre-segmented into domains, we obtain obtain individual generic domains from $D_{\mathrm{gen}}$ with clustering. Specifically, we use the embedding-and-clustering pipeline from \cite{grangier2024specialized}.
We first embed all the training sequences $\vx \in D_{\mathrm{gen}}$ with SentenceBert (all-MiniLM-L6-v2), yielding a 384 dimensional embedding $\mathrm{Bert}(\vx)$. We then apply $k$-means clustering on the sentence embeddings into $k=64$ clusters yielding $k$ domains $D_1,\dots, D_{k}$.

To get fine-grained generic domains, we apply hierarchical clustering on the top of the first level of $k_1=64$ clusters. Specifically, each domain is further clustered once again into $64$ smaller clusters. We apply this strategy twice to get domains with granularity $k_2=64^2=4096$ and $k_3=64^3=262k$. 

\textbf{Model Architecture.} 
We train small ($125$M), medium ($350$M) and large ($750$M) models with decoder-only transformers~\citep{vaswani2017tranformers}. We adopt most of the training settings and architectures from~\citep{brown2020lm_few_shots}. Their details are provided in \autoref{app:sec:hyperparams}. 
For optimization, we use the AdamW optimizer~\citep{loshchilov2017decoupled}.
\subsection{Domain Reweighting with Limited Resources}\label{sec:token_limit}
\begin{figure*}[!hbpt]
  \centering
  \includegraphics[width=.7\linewidth]{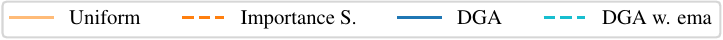} \vspace{-7pt}   \\
  \begin{subfigure}[t]{0.32\linewidth}\vskip 0pt
  \includegraphics[width=\textwidth,clip]{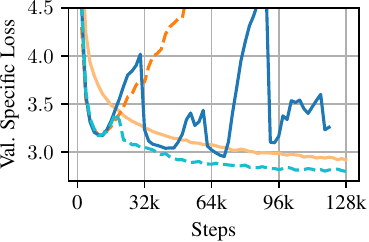}
  \caption{30M tokens per domain}
  
  \end{subfigure}\hfill
  \begin{subfigure}[t]{0.32\linewidth}\vskip 0pt
  \includegraphics[width=\textwidth,clip]{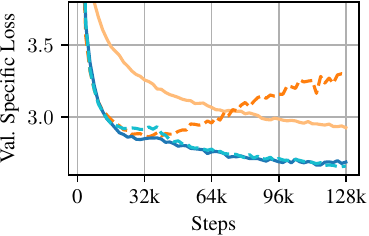}
  \caption{0.1B tokens per domain}
  
  \end{subfigure}\hfill
  \begin{subfigure}[t]{0.32\linewidth}\vskip 0pt
  \includegraphics[width=\textwidth,clip]{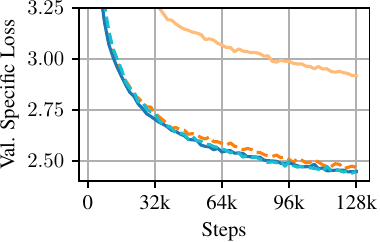}
  \caption{No token limit}
  
  \end{subfigure}
\caption{Comparing data reweighting methods with \textit{free\_law} as a specific set in a low generic data regime. When there are not enough tokens, importance sampling quickly overfits, while DGA manages to explore the training distributions to avoid overfitting. We see the importance of the EMA to stabilize DGA in the low data regime. When there is no token limit, adding an EMA ($\beta=0.1$) does not negatively affect the performance.}
 \label{fig:lowdata}
\end{figure*}
Previous works on domain reweighting implicitly assume infinite token resources from all training domains \citep{xie2023doremi,fan2024doge,liu2024regmix} while it is not always applicable in real-world cases. 
The scenario with limited training resources is challenging for online domain reweighting. 
Indeed, if the weights are concentrated on a few domains, e.g. on a single domain $D_i$, a large model 
will quickly overfit when the number of tokens in $D_i$ is small.

We expect DGA to mitigate overfitting by dynamically adjusting the domain weights. 
Specifically, once a model starts overfitting on $D_i$, the magnitude of the gradients $\nabla \ell(\vtheta, D_i)$
decreases as its training loss $\ell(\vtheta, D_i)$ is low, i.e. the domain knowledge from $D_i$ is well-learned. 
Consequently, the corresponding gradient alignment score $\va_i = \langle \nabla \ell(\vtheta, D_i), \nabla \ell(\vtheta, D_{\mathrm{spe}})\rangle$ decreases as well and DGA explores other domains with higher alignment scores. In other words, DGA down-weights domains once they are well-learned, thereby achieving a balance between \textbf{\textit{exploration}} -- by learning from diverse data domains -- and \textbf{\textit{exploitation}}, by intensively training on the most relevant domains.

However, with limited data per domain, we remark that
DGA without EMA demonstrates drastic changes at each domain weight update, focusing heavily on one domain at a time.
Quickly changing domain weights is problematic since we want to use the same domain 
weights for $T_{r}$ steps in the future. This motivates the introduction of
the EMA update in \autoref{alg:dga}, which regularizes the model and domain weights with the previous state when it starts to overfit. 

\textbf{Experiment Setup.}
We consider the generic domain split into $k=64$ domains. We construct three scales of generic sets, 
either taking the full dataset or randomly sub-sampling $30$M, $0.1$B tokens per domain.
For the targeted specific set $D_{\mathrm{spe}}$, we use $5$ subsets from \textit{the Pile} \citep{gao2020pile} covering common specialized data types for LM applications: Math (\textit{dm\_mathematics}), Code (\textit{github}, \textit{stackexchange}), Medical (\textit{pubmed\_central}), Legal (\textit{free\_law}) and Scientific articles (\textit{arxiv}). 

We implement the importance sampling baseline described in \autoref{sec:importance_sampling}. We also compare to the \textit{uniform baseline} with the domain weights $\valpha_{\mathrm{uniform}}$ as the natural proportion of each data domain in the generic \texttt{Redpajama-v2} dataset. For importance sampling and uniform baselines, the domain weights are fixed throughout the entire training run. For both vanilla DGA and DGA with an EMA ($\beta=0.1$), we update domain weights $\valpha$ every $T_{r} = 100$ steps. We use $125$M models.

\textbf{Results.} We report the validation loss on the specialized set under various token constraints in \autoref{fig:lowdata} for \textit{free\_law} and the results on other domains in \autoref{app:sec:lowdata}. 
With $30$M tokens per domain, DGA with EMA effectively stabilizes the training, while vanilla DGA exhibits several loss spikes, suggesting a lack of robustness. Under a $0.1$B token constraint, both DGA and DGA with EMA are able to dynamically adjust domain weights to mitigate overfitting. In contrast, fixed domain weights from importance sampling consistently lead to overfitting in token-limited scenarios, demonstrating the limitations of static weighting strategies in comparison to dynamic approaches like DGA. It is worth noting that adding the EMA has no negative effect on the learning efficacy when there is no token limit, which can be used as a robust regularization in the online domain reweighting context.

\textbf{Domain Weights Evolution.}
In the experiments with a limited generic token budget (\autoref{sec:token_limit}), DGA without EMA often assigns excessive weight to one generic domain, leading to overfitting due to the restricted number of training tokens. 
This iterative over-weighting pattern on generic domain weights aligns with the observed loss spikes on the specific set (\autoref{fig:limit_rw}). In contrast, the EMA helps to regularize the weight dynamics, effectively preventing the model from overfitting by maintaining more balanced domain weights throughout the training process.

\subsection{Distribution Reweighting: Scaling-up Data Mixing on Extremely Fine-grained Data Domains}
\label{sec:expe_fine_domains}
The computational overhead from \textsc{DGA} scales linearly with the number of domain $k$. This is intractable for 
datasets segmented in many fine-grained domains and, consequently, prior domain reweighting methods \citep{xie2023doremi, fan2024doge, liu2024regmix, kang2024autoscale} have not been applied in that setting.
The fine-grained setting motivates \textit{distribution reweighting} as an alternative to direct \textit{domain reweighting}.

\textit{Distribution reweighting} leverages the strength from both embedding-based (importance sampling) and gradient-based (DGA) strategies. We consider a generic training set partitioned into $k$ domains with a large $k$ (e.g. $4096, 262k$). We also have a set of $N$ auxiliary datasets $\{S_1, \hdots, S_N\}$, called \emph{basis sets}, each from a specific domain of interest. We compute the importance sampling histograms for each basis set as $P=\{\vp_1, \hdots, \vp_N\}$, $\vp_i \in \Delta^{k}, P \in R^{k\times N}$.
We then use \textsc{DGA} to search over a reparameterized space leveraging this basis. We define the 
\textit{domain weights} $\valpha_{\mathrm{domain}}\in \Delta^{k}$ as a convex combination of $N$ $k-$dimensional distributions derived from importance sampling,
\begin{equation}\label{equ:dist-reweight}
    \valpha_{\mathrm{domain}} \approx P\valpha_{\mathrm{dist}}  = \alpha_{\mathrm{dist},1}\cdot \vp_1 + \alpha_{\mathrm{dist},2}\cdot \vp_2 + \hdots + \alpha_{\mathrm{dist},N}\cdot \vp_N
\end{equation}
where the low-dimensional weights $\valpha_{\mathrm{dist}}\in\Delta^N$ are learned by DGA.
This allows the use of fine-grained domain features while eliminating intensive gradient computation on each generic domain. 
Importantly, this is equivalent to applying DGA with the $N$ generic domains $\tilde{D}_1,\dots, \tilde{D}_N$ where $\tilde{D}_i = \mathrm{mix}(\vp_i)$.
Hence, it does not require any modification to the base DGA algorithm; it suffices to be able to sample according to each $\mathrm{mix}(\vp_i)$. We provide the pseudo-code for the distribution reweighting with DGA in \autoref{app:sec:distribution_reweighting_algorithm}.

\textbf{Experiment Setup.}
We demonstrate the efficacy of distribution reweighting on the MMLU benchmark \citep{hendrycks2021measuringmassivemultitasklanguage}. MMLU consists of 57 tasks from various knowledge fields, which serves as a testbed of multi-domain language modeling; 
by measuring the downstream accuracy, we can assess whether the improvements obtained in language modeling transfer to reasoning abilities. 

We construct two specific datasets with different amounts of accessible samples: (1) \texttt{MMLU$\_$a}: we take half of the examples from each task used as $D_{\mathrm{spe}}$. We denote the other half of datapoints as \texttt{MMLU$\_$b}, which is used for evaluation; (2) \texttt{MMLU$\_$dev}: we randomly select 5 samples from each task, simulating the few-shot learning scenario.
\texttt{MMLU$\_$a} has $7.1k$ samples while \texttt{MMLU$\_$dev} only has $285$ samples, which yields sparse importance sampling histograms. 
For evaluation, we assess the language modeling performance by computing perplexity on \texttt{MMLU$\_$b}. We also measure the accuracy for multiple choice question answering on \texttt{MMLU$\_$b} with llm-eval \citep{eval-harness}.

We use generic domain splits with $k=64, 4096, 262k$ domains.
We rely on $22$ auxiliary subdomains from \textit{The Pile} \citep{gao2020pile} as our basis sets. For each auxiliary set, 
we take $15M$ tokens and compute their importance-sampling histograms as $\vp_1,\dots, \vp_N\in\Delta^k$.
To search for the optimal balance between diversity and specificity, we extend the basis sets with the importance sampling histogram from the specific set itself (i.e. \texttt{MMLU$\_$a} or \texttt{MMLU$\_$dev}), yielding $N=23$ distributions.
For this experiment, we use $750$M models.

\textbf{DGA with distribution reweighting greatly improves language modeling.} We report the loss on \texttt{MMLU$\_$b} and average validation loss across 22 domains in the Pile in \autoref{fig:dist_reweighting}. 
Both importance sampling and DGA significantly outperform the uniform baseline. 
Compared to importance sampling, we observe that DGA with distribution reweighting leads to a better Pareto-front, indicating a better balance between specialized (MMLU) and general knowledge (The Pile).
With a large domain granularity ($k=262k$ domains), training with importance sampling greatly suffers from the sparse histograms, leading to significant performance degradation. In contrast, DGA can consistently provide satisfying domain weight estimation.

\begin{figure*}[t]
  \centering
  \includegraphics[width=.7\linewidth]{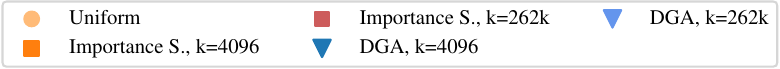} \vspace{-7pt}   \\
  \begin{subfigure}[t]{0.35\linewidth}\vskip 0pt
  \includegraphics[width=\textwidth,clip]{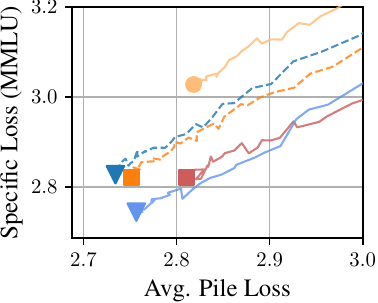}
  \caption{\texttt{MMLU$\_$a} (half the examples)}
  
  \end{subfigure}
  \begin{subfigure}[t]{0.35\linewidth}\vskip 0pt
  \includegraphics[width=\textwidth,clip]{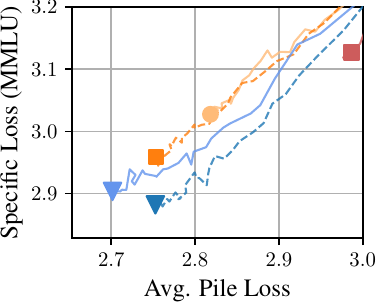}
  \caption{\texttt{MMLU$\_$dev} (5 examples per task)}
  
  \end{subfigure}\hfill
  \caption{Distribution reweighting experiment.}
 \label{fig:dist_reweighting}
\end{figure*}
\paragraph{Language modeling $\neq$ reasoning accuracy.} According to \autoref{tab:mmlu_acc}, both importance sampling and DGA reweighting greatly outperform the uniform baseline, while DGA does not show significant improvement above importance sampling despite the great improvement in language modeling. It indicates that better language modeling performance may not be necessarily transferable to better reasoning abilities. We report the full results with different model scales in \autoref{app:sec:distribution_reweighting}.
\begin{table}[!ht]
  \caption{MMLU accuracies with domain reweighting methods. Both importance sampling and DGA reweighting greatly improve the accuracy above uniform baseline, while DGA does not show significant improvement above importance sampling. \label{tab:mmlu_acc}}
   \centering
   \vspace{0.4em}
  \begin{adjustbox}{max width=0.85\textwidth}
  \begin{tabular}{ll|rr}
    \toprule
\multicolumn{2}{l}{Method} & \texttt{MMLU\_a} & \texttt{MMLU\_dev} \\
\midrule
Uniform&  &  26.1  $\%$ & 26.1  $\%$ \\
\midrule
Importance S. & $k=4096$    &  27.7 $\%$ & 27.7  $\%$ \\
              & $k=262k$    &  28.4 $\%$ & 27.0  $\%$ \\
\midrule
DGA dist. reweighting & $k=4096$ &  26.8 $\%$ & 27.4 $\%$\\
                      & $k=262k$ &  28.0 $\%$ &  27.0 $\%$\\
    \bottomrule
  \end{tabular}
  \end{adjustbox}
\end{table}
\begin{figure*}[ht]
  \centering
  \begin{subfigure}[t]{0.32\linewidth}\vskip 0pt
  \includegraphics[width=\textwidth,clip]{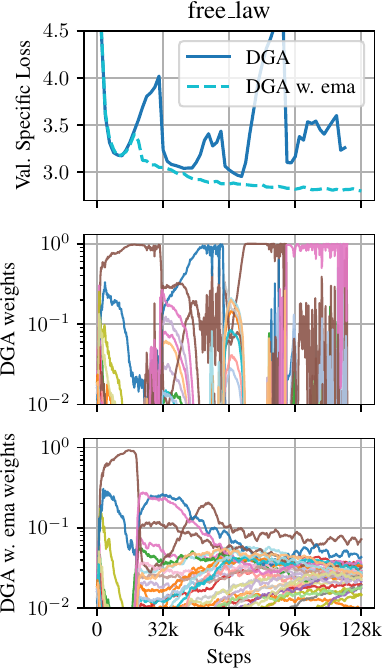}
  \caption{\textsc{DGA} w. target at \texttt{free\_law} ($30$M tokens per domain)}
  \label{fig:limit_rw}
  \end{subfigure}\hfill
  \begin{subfigure}[t]{0.32\linewidth}\vskip 0pt
  \includegraphics[width=\textwidth,clip]{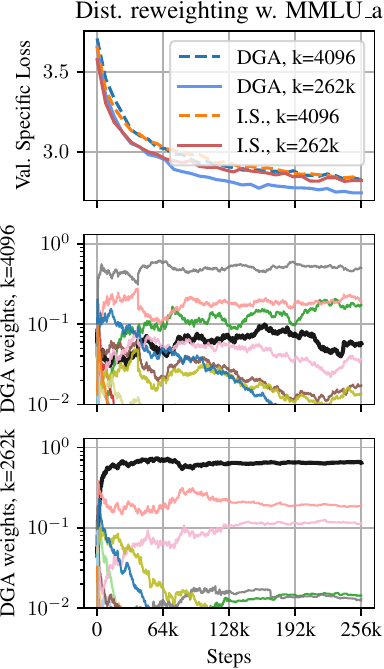}
  \caption{\textsc{DGA} w. target at \texttt{MMLU$\_$a} }
  \label{fig:dist_a}
  \end{subfigure}\hfill
  \begin{subfigure}[t]{0.32\linewidth}\vskip 0pt
  \includegraphics[width=\textwidth,clip]{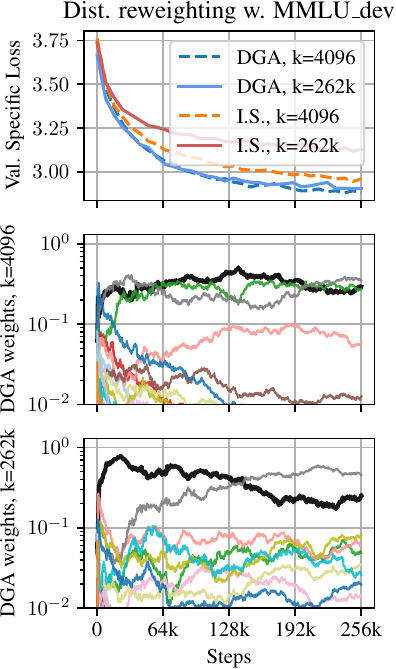}
  \caption{\textsc{DGA} w. target at \texttt{MMLU$\_$dev}}
  \label{fig:dist_dev}
  \end{subfigure}\hfill
\caption{The top row presents the specific loss over time, with the two bottom rows illustrating the evolution of domain (dist.) weights from DGA correspondingly, with each line representing a distinct domain. 
\textbf{Left}: Weights from the limited generic token experiment (\autoref{sec:token_limit}). \textbf{Middle} and \textbf{Right}: Weights from the distribution reweighting experiment (\autoref{sec:expe_fine_domains}). The thick black line highlights the dynamic weights assigned by DGA on the MMLU importance sampling distribution, which serves as a fixed training distribution for the importance sampling runs.}
 \label{fig:histograms}
\end{figure*}

\textbf{Weights Evolution on Distributions.}
We present the evolution of domain weights for each basis distribution from DGA in \autoref{fig:histograms}. Comparing different levels of granularity, with $k$=$262k$, the importance of the MMLU distribution is more emphasized than with $k$=$4096$, with the help of fine-grained domain features. 
Additionally, with sufficient samples from the specific domain (\texttt{MMLU\_a}, \autoref{fig:dist_a}), the MMLU distribution is consistently up-weighted across $262k$ generic domains. 
In contrast, on \texttt{MMLU\_dev}, while the distribution on MMLU is initially up-weighted, it declines gradually in the late stage of training.
Owing to the number of accessible samples from the specific set, the importance sampling distribution on \texttt{MMLU\_dev} across $262k$ generic domains is very sparse.
During the training, the learnability of the few activated generic domains diminishes, making other distributions more beneficial to the model.

In addition to the importance sampling distribution from the specific sets (\texttt{MMLU\_a} and \texttt{MMLU\_dev}), \textsc{DGA} effectively identifies other relevant distributions from The Pile that contribute to the learning on MMLU. These influential distributions, which include \texttt{phil\_papers}, \texttt{free\_law}, and \texttt{dm\_mathematics}, are all considered to contain high-quality, academic-related contents. We present detailed curves with domain labels in \autoref{app:sec:weights_dr}.
This ability to adaptively select beneficial distributions enhances the model’s generalization and helps mitigate overfitting by leveraging a broader yet pertinent set of data sources during pretraining.

\textbf{Impact of generic domain granularity.} 
In \autoref{fig:granularity}, we present the validation loss on the specific domain according to the number of clusters within the generic dataset. From $k=64$ to $4096$, both \textsc{DGA} and importance sampling demonstrate significant improvement in language modeling in terms of validation loss (i.e., log of perplexity). However, when the number of clusters exceeds the scale of the accessible specific set, the importance sampling method overfits the limited number of activated generic domains, failing to capture broader domain knowledge. In contrast, \textsc{DGA} effectively leverages extremely fine-grained domain information across $262k$ generic domains with only $7k$ samples from \texttt{MMLU\_a}. 
In the few-shot context (\texttt{MMLU\_dev}), \textsc{DGA} mitigates a large performance degradation by utilizing diverse domain knowledge from other relevant distributions.
\begin{figure*}[!ht]
  \centering
  \begin{subfigure}[t]{0.4\linewidth}\vskip 0pt
  \includegraphics[width=\textwidth,clip]{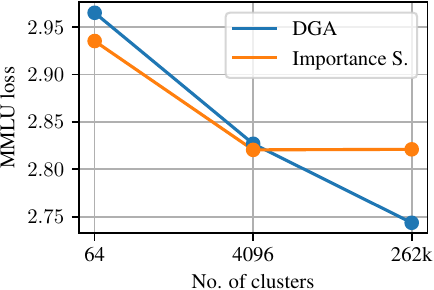}
  \caption{$D_{\mathrm{spe}}=\texttt{MMLU\_a}$}
  
  \end{subfigure}
  \begin{subfigure}[t]{0.4\linewidth}\vskip 0pt
  \includegraphics[width=\textwidth,clip]{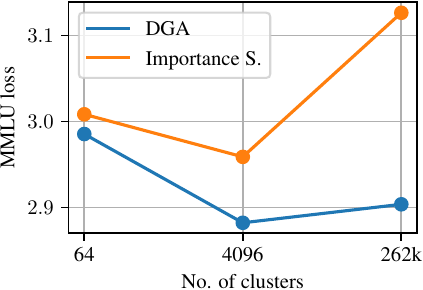}
  \caption{$D_{\mathrm{spe}}=\texttt{MMLU\_dev}$}
  
  \end{subfigure}
\caption{Impact of the generic set granularity for the distribution reweighting experiment (\autoref{sec:expe_fine_domains}. We report the specific loss obtained after training for different granularities of the base clustering.}
 \label{fig:granularity}
\end{figure*}
\section{Related Work}
\paragraph{Task-adaptive Data Selection for Domain-Specific LLMs.}
Many works have shown that one can effectively improve the LLM's performance on a specific downstream task with data selection according to the relevance of generic data for the targeted data domain. 
\citet{gururangan2020dontstoppretrainingadapt} show that continued pretraining on data with high vocabulary overlap can boost its performance on the specific end-task. 
On machine translation task, \citet{aharoni-goldberg-2020-unsupervised} identify task-relevant pretraining datasets from a generic corpus using nearest neighbor of a small specialist dataset based on SentenceBert sentence representation.
\citet{wang2020optimizing,grangier2023adaptive} train a small proxy model to give an importance weight per sample. 
\citet{xie2023dataselectionlanguagemodels} proposed DSIR as a lexical-based importance sampling method using n-gram features. 

Other than feature-based importance sampling \citep{grangier2024specialized}, influence function-based method select data points which leads to the greatest loss drop on the target from the optimization perspective \citep{koh2020understanding, kwon2024datainf, agarwal2017secondorder}. However, these methods often introduce intensive computational overheads from the second-order gradient computations, which is not applicable on large generic pretraining corpus.
\paragraph{Data Resampling through Domain Reweighting.}
Given the large scale of the generic pretraining corpus, sample-level selection strategies are hard to implement for LLM pretraining. Alternatively, domain reweighting methods \citep{xie2023doremi, fan2024doge, liu2024regmix, kang2024autoscale} apply group-level selection by adjusting data sampling weights across different domains to reflect their importance in pretraining.  
Based on the weak-to-strong generalization strategy \citep{burns2023weaktostronggeneralizationelicitingstrong}, existing domain re-weighting methods typically estimate the optimal domain weights for a larger base model based on the preferences of a small-scale proxy model. \citet{xie2023doremi} apply group distributed robust optimization to optimize the worst-case loss gap between two small-scale proxies. \citet{fan2024doge} use gradient alignment to dynamically adjust domain weights during proxy model training. Specifically, it identifies the most beneficial domains by aligning the gradients of the training data with the target task. However, it trains the proxy model on reweighted domain gradients to simulate the resampling scenario, which introduces more variance in the domain weights estimation. 

\section{Conclusion}
To tackle two key challenges of online domain reweighting, we introduce Dynamic Gradient Alignment (DGA) as a stable and scalable data mixing method for LLM pretraining. 
Given a target task, DGA is an online algorithm that adjusts the training data distribution according to the current model status. 
This adaptation relies on an estimate of the progress on the target task from gradient alignments. 
We show that under limited tokens within generic domains, DGA with EMA can notably mitigate overfitting and yields superior performance on the end-task by balancing exploitation and exploration.
We also propose a novel distribution reweighting strategy, which enables DGA to scale up to extremely fine-grained data domains without incurring intensive computations. Our experiments on MMLU show that applying distribution reweighting with DGA effectively leverages fine-grained domain knowledge to balance specialty and diversity during training. 
Our work demonstrates the scalability of gradient-alignment-based data reweighting methods, as well as their efficiency in data-constrained settings.

\subsubsection*{Acknowledgments}
We thank Angelos Katharopoulos, Skyler Seto, and Matteo Pagliardini for their help and fruitful discussions during the project.
\newpage
\bibliographystyle{abbrvnat}
\bibliography{biblio}

\newpage
\appendix
\section{Training with Limited Generic Tokens}
\label{app:sec:lowdata}
\subsection{Validation Loss on the Targeted End-task}
We present the complete results on all six target domains (\texttt{arxiv}, \texttt{free\_law}, \texttt{dm\_mathematics}, \texttt{pubmed\_central}, \texttt{github}, \texttt{stackexchange}) as follows. 
Across all six target domains, DGA with EMA ($\beta=0.1$) consistently stablize the training and yields better language modelling performance under token-limited contexts. 
\begin{figure*}[!h]
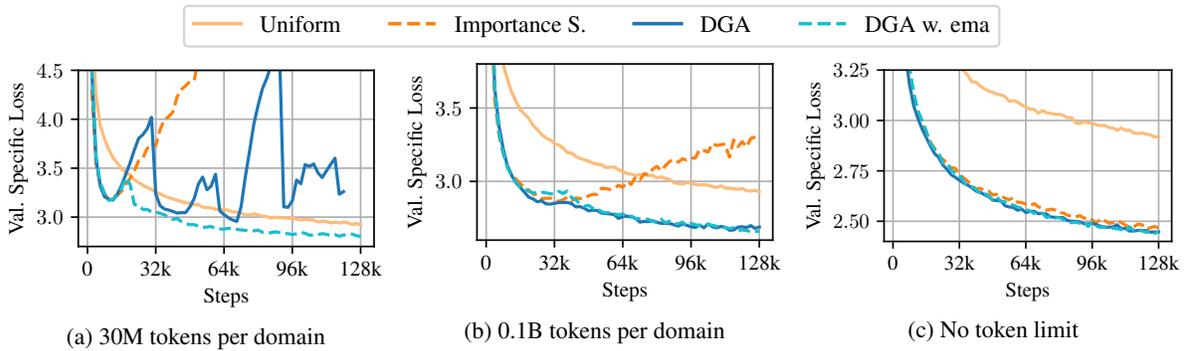

  \centering
  \includegraphics[width=.7\linewidth]{figs/lowdata/legend.pdf} \vspace{-7pt}   \\
  \begin{subfigure}[t]{0.32\linewidth}\vskip 0pt
  \includegraphics[width=\textwidth,clip]{figs/lowdata/free_law-31M.pdf}
  \caption{30M tokens per domain}
  
  \end{subfigure}\hfill
  \begin{subfigure}[t]{0.32\linewidth}\vskip 0pt
  \includegraphics[width=\textwidth,clip]{figs/lowdata/free_law-99M.pdf}
  \caption{0.1B tokens per domain}
  
  \end{subfigure}\hfill
  \begin{subfigure}[t]{0.32\linewidth}\vskip 0pt
  \includegraphics[width=\textwidth,clip]{figs/lowdata/free_law-no_limit.pdf}
  \caption{No token limit}
  
  \end{subfigure}
\caption{Results on all the domains for the low data experiment (\autoref{sec:token_limit}). The specific domain is \texttt{free\_law}.}
 
\end{figure*}

\begin{figure*}[!h]
  \centering
  \includegraphics[width=.7\linewidth]{figs/lowdata/legend.pdf} \vspace{-7pt}   \\
  \begin{subfigure}[t]{0.32\linewidth}\vskip 0pt
  \includegraphics[width=\textwidth,clip]{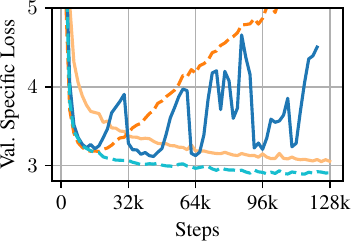}
  \caption{30M tokens per domain}
  
  \end{subfigure}\hfill
  \begin{subfigure}[t]{0.32\linewidth}\vskip 0pt
  \includegraphics[width=\textwidth,clip]{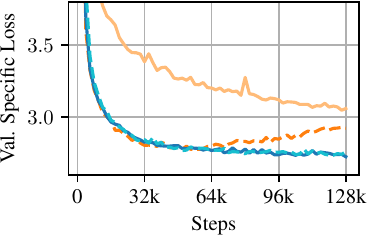}
  \caption{0.1B tokens per domain}
  
  \end{subfigure}\hfill
  \begin{subfigure}[t]{0.32\linewidth}\vskip 0pt
  \includegraphics[width=\textwidth,clip]{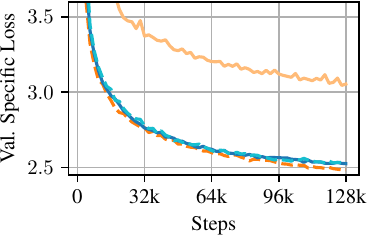}
  \caption{No token limit}
  
  \end{subfigure}
\caption{Results on all the domains for the low data experiment (\autoref{sec:token_limit}). The specific domain is \texttt{arxiv}}
 
\end{figure*}

\begin{figure*}[!h]
  \centering
  \includegraphics[width=.7\linewidth]{figs/lowdata/legend.pdf} \vspace{-7pt}   \\
  \begin{subfigure}[t]{0.32\linewidth}\vskip 0pt
  \includegraphics[width=\textwidth,clip]{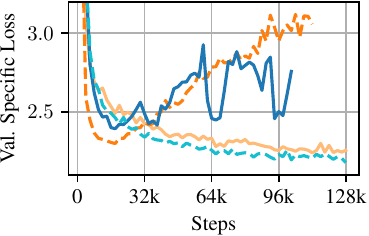}
  \caption{30M tokens per domain}
  
  \end{subfigure}\hfill
  \begin{subfigure}[t]{0.32\linewidth}\vskip 0pt
  \includegraphics[width=\textwidth,clip]{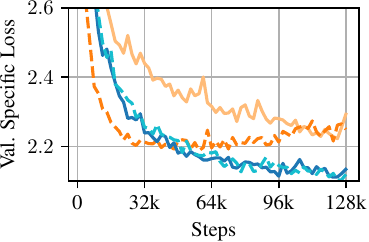}
  \caption{0.1B tokens per domain}
  
  \end{subfigure}\hfill
  \begin{subfigure}[t]{0.32\linewidth}\vskip 0pt
  \includegraphics[width=\textwidth,clip]{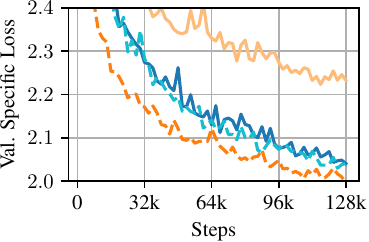}
  \caption{No token limit}
  
  \end{subfigure}
\caption{Results on all the domains for the low data experiment (\autoref{sec:token_limit}). The specific domain is \texttt{dm-mathematics}}
 
\end{figure*}

\begin{figure*}[!h]
  \centering
  \includegraphics[width=.7\linewidth]{figs/lowdata/legend.pdf} \vspace{-7pt}   \\
  \begin{subfigure}[t]{0.32\linewidth}\vskip 0pt
  \includegraphics[width=\textwidth,clip]{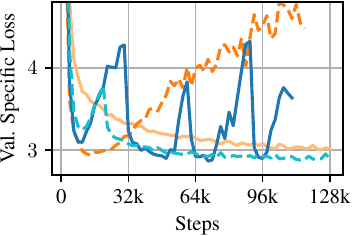}
  \caption{30M tokens per domain}
  
  \end{subfigure}\hfill
  \begin{subfigure}[t]{0.32\linewidth}\vskip 0pt
  \includegraphics[width=\textwidth,clip]{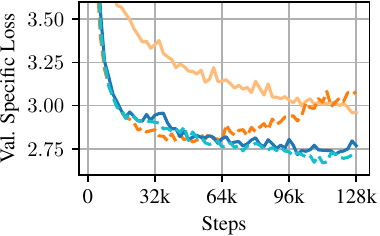}
  \caption{0.1B tokens per domain}
  
  \end{subfigure}\hfill
  \begin{subfigure}[t]{0.32\linewidth}\vskip 0pt
  \includegraphics[width=\textwidth,clip]{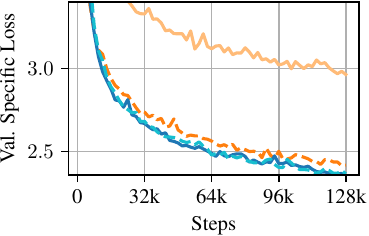}
  \caption{No token limit}
  
  \end{subfigure}
\caption{Results on all the domains for the low data experiment (\autoref{sec:token_limit}). The specific domain is \texttt{github}}
 
\end{figure*}

\begin{figure*}[!h]
  \centering
  \includegraphics[width=.7\linewidth]{figs/lowdata/legend.pdf} \vspace{-7pt}   \\
  \begin{subfigure}[t]{0.32\linewidth}\vskip 0pt
  \includegraphics[width=\textwidth,clip]{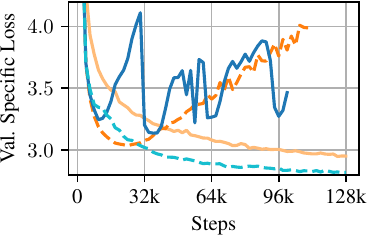}
  \caption{30M tokens per domain}
  
  \end{subfigure}\hfill
  \begin{subfigure}[t]{0.32\linewidth}\vskip 0pt
  \includegraphics[width=\textwidth,clip]{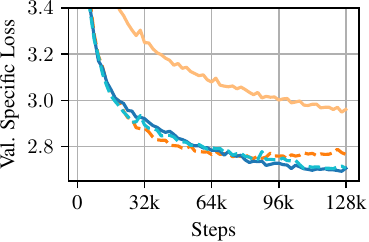}
  \caption{0.1B tokens per domain}
  
  \end{subfigure}\hfill
  \begin{subfigure}[t]{0.32\linewidth}\vskip 0pt
  \includegraphics[width=\textwidth,clip]{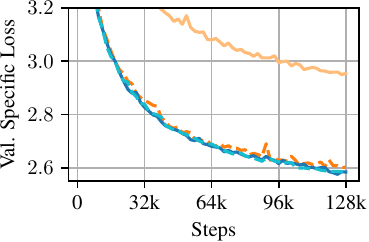}
  \caption{No token limit}
  
  \end{subfigure}
\caption{Results on all the domains for the low data experiment (\autoref{sec:token_limit}). The specific domain is \texttt{pubmed-central}}
 
\end{figure*}

\begin{figure*}[!h]
  \centering
  \includegraphics[width=.7\linewidth]{figs/lowdata/legend.pdf} \vspace{-7pt}   \\
  \begin{subfigure}[t]{0.32\linewidth}\vskip 0pt
  \includegraphics[width=\textwidth,clip]{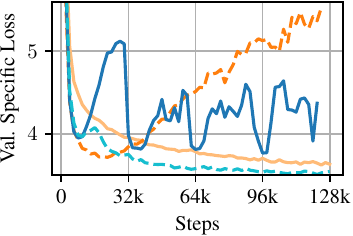}
  \caption{30M tokens per domain}
  
  \end{subfigure}\hfill
  \begin{subfigure}[t]{0.32\linewidth}\vskip 0pt
  \includegraphics[width=\textwidth,clip]{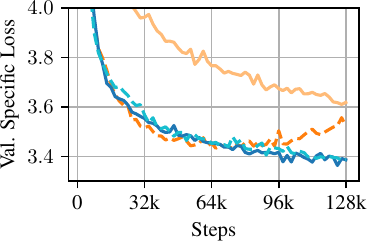}
  \caption{0.1B tokens per domain}
  
  \end{subfigure}\hfill
  \begin{subfigure}[t]{0.32\linewidth}\vskip 0pt
  \includegraphics[width=\textwidth,clip]{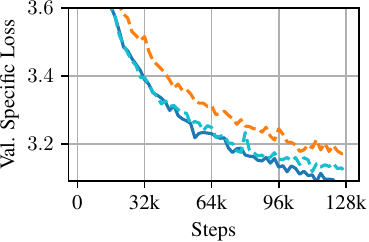}
  \caption{No token limit}
  
  \end{subfigure}
\caption{Results on all the domains for the low data experiment (\autoref{sec:token_limit}). The specific domain is \texttt{stackexchange}}
 
\end{figure*}

\subsection{Domain Weights Evolution}
\label{app:sec:domain_reweight}
We present the domain weights evolution on 64 generic domains from DGA with and w.o. EMA regularization. With both \texttt{stackexchange} and \texttt{free\_law} as the specific set, EMA effectively smoothes the spiky domain weights, which therefore stablize the training process.
\begin{figure*}[!htbp]
  \centering
  \begin{subfigure}[t]{0.44\linewidth}\vskip 0pt
\includegraphics[width=\textwidth,clip]{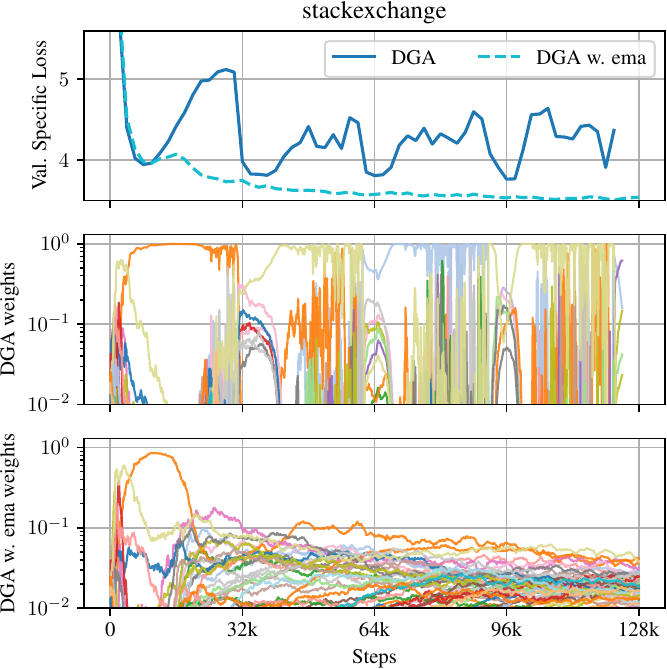}
  \caption{}
  
  \end{subfigure}\hfill
  \begin{subfigure}[t]{0.44\linewidth}\vskip 0pt
  \includegraphics[width=\textwidth,clip]{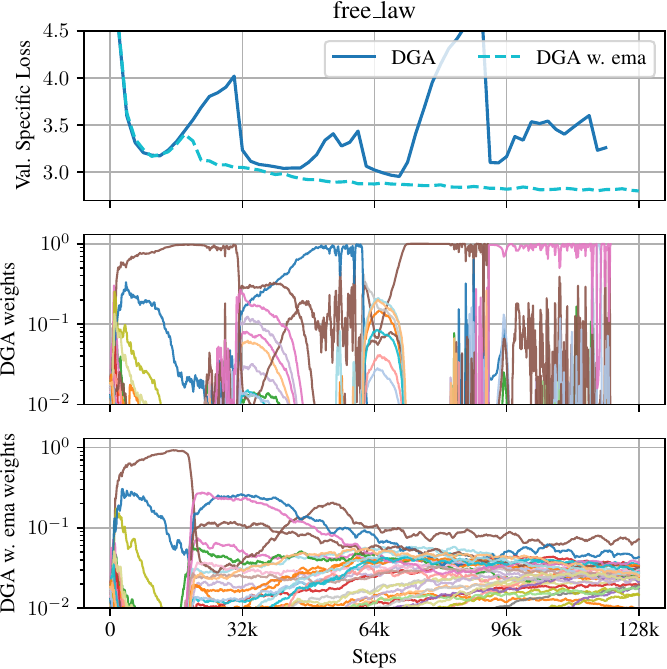}
  \caption{}
  
  \end{subfigure}\hfill
\caption{Comparing data reweighting methods with \texttt{stackexchange} (resp. \texttt{free\_law}) as the specific set, in a low generic data regime. 
When there are not enough tokens, importance sampling quickly overfits, while DGA manages to explore the training distributions to avoid overfitting. We see the importance of the EMA to stabilize DGA in the low data regime.}
 \label{fig:lowdata_freelaw}
\end{figure*}

\newpage
\section{Distribution Reweighting}
\label{app:sec:distribution_reweighting}
\subsection{Evaluation Results on MMLU}
We present the complete evaluation results on MMLU benchmark on small- ($125$M) and large- ($750$M) scale models. $k$ denotes the number of generic domains, $N$ denotes the number of reweighted importance sampling distributions from \textit{basis sets}. $N$=$22$ indicates we only reweight 22 distributions from 22 \textit{The Pile} subsets, while $N$=$23$ includes the importance sampling histgram from the specific set (MMLU). 
Since the $125$M model shows marginal difference in accuracy because of limited capacity, we only scored $750$M model on MMLU reasoning accuracies.

\begin{table}[!ht]
\centering
\caption{Results on the domain reweighting experiment, with half MMLU as train set. The best results is \textbf{Bolded} and the second best is \underline{Underlined}. \label{tab:domain_reweighting_results}}
\begin{tabular}{l| r r r}
\toprule
125M model & MMLU loss & MMLU acc. & avg. Pile loss\\
\midrule
Uniform                   & 3.56 &  - & 3.27 \\
Importance S. (k=4096)      & 3.32 &  - & 3.16 \\
Importance S. (k=262k)      & \textbf{3.22} &  - & 3.19 \\
DGA domain reweighting (k=64) & 3.31 &  - & \underline{3.10} \\
DGA dist. reweighting (N=22, k=4096) & 3.34 &  - & 3.13 \\
DGA dist. reweighting (N=22, k=262k) & 3.34 &  - & \textbf{3.05} \\
DGA dist. reweighting (N=23, k=4096) & 3.33 &  - & 3.13 \\
DGA dist. reweighting (N=23, k=262k) & \underline{3.25} &  - & \underline{3.10} \\
\midrule
750M model & MMLU loss & MMLU acc. & avg. Pile loss\\
\midrule
Uniform                   & 3.03 &  26.1 $\%$ & 2.82 \\
Importance S. k=4096      & \underline{2.82} &  27.7 $\%$ & 2.75 \\
Importance S. k=262k      & \underline{2.82} &  \textbf{28.4} $\%$ & 2.81 \\
DGA domain reweighting (k=64)                  & 2.97 &  27.1 $\%$ & 2.77 \\
DGA dist. reweighting (N=22, k=4096) & 2.86 &  27.2 $\%$ & \underline{2.73} \\
DGA dist. reweighting (N=22, k=262k) & 2.84 &  27.0 $\%$ & \textbf{2.66} \\
DGA dist. reweighting (N=23, k=4096) & 2.83 &  26.8 $\%$ & \underline{2.73} \\
DGA dist. reweighting (N=23, k=262k) & \textbf{2.74} &  \underline{28.0} $\%$ & 2.76 \\
\bottomrule
\end{tabular}
\end{table}

\begin{table}[h!]
\centering
\caption{Results on the domain reweighting experiment, with 5 examples per task of MMLU as train set. We score only the 750M models.\label{tab:domain_reweighting_results_dev}}
\begin{tabular}{l| r r r}
\toprule
125M model & MMLU loss & MMLU acc. & avg. Pile loss\\
\midrule
Uniform                   & 3.56 &  - & 3.27 \\
Importance S. (k=4096)      & 3.40 &  - & 3.16 \\
Importance S. (k=262k)      & 3.41 &  - & 3.29 \\
DGA domain reweighting (k=64)& 3.46 &  - & 3.19 \\
DGA dist. reweighting (N=22, k=4096) & 3.37 &  - & 3.12 \\
DGA dist. reweighting (N=22, k=262k) & \underline{3.36} &  - & \textbf{3.03} \\
DGA dist. reweighting (N=23, k=4096) & 3.37 &  - & 3.14 \\
DGA dist. reweighting (N=23, k=262k) & \textbf{3.35} &  - & \underline{3.08} \\
\midrule
750M model & MMLU loss & MMLU acc. & avg. Pile loss\\
\midrule
Uniform                   & 3.03 &  26.1 $\%$ & 2.82 \\
Importance S. k=4096      & 2.96 &  27.7 $\%$ & 2.75 \\
Importance S. k=262k      & 3.13 &  \underline{27.0} $\%$ & 2.99 \\
DGA k=64                  & 3.01 &  \underline{27.0} $\%$ & 2.77 \\
DGA dist. reweighting from 22 domains, k=4096 & \underline{2.89} &  26.8 $\%$ & 2.76 \\
DGA dist. reweighting from 22 domains, k=262k & 2.93 &  \underline{27.0} $\%$ & \textbf{2.68} \\
DGA dist. reweighting from 23 domains, k=4096 & \textbf{2.88} &  \textbf{27.4} $\%$ & 2.75 \\
DGA dist. reweighting from 23 domains, k=262k & 2.90 &  \underline{27.0} $\%$ & \underline{2.70} \\
\bottomrule
\end{tabular}
\end{table}

\subsection{Weights Evolution on Distributions}
\label{app:sec:weights_dr}
\begin{figure*}[!ht]
  \centering
  \begin{subfigure}[t]{0.49\linewidth}\vskip 0pt
  \includegraphics[width=\textwidth,clip]{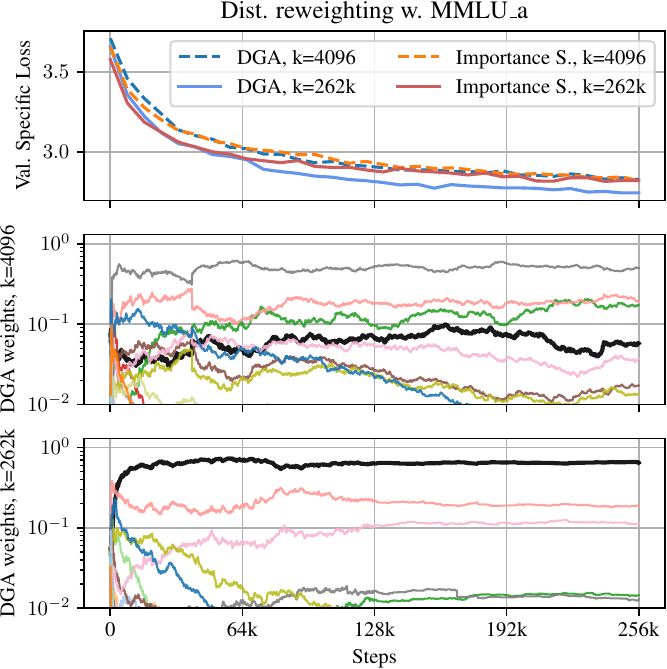}
  \caption{\textsc{DGA} w. target at \texttt{MMLU$\_$a} }
  
  \end{subfigure}\hfill
  \begin{subfigure}[t]{0.49\linewidth}\vskip 0pt
  \includegraphics[width=\textwidth,clip]{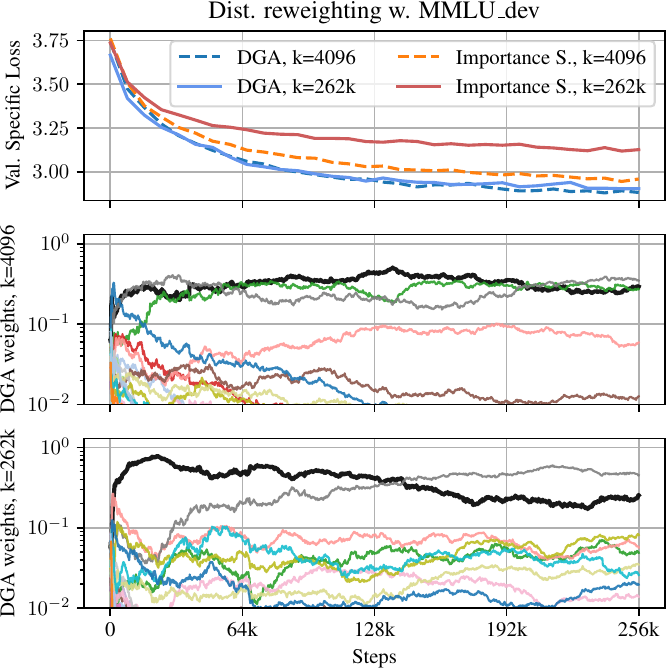}
  \caption{\textsc{DGA} w. target at \texttt{MMLU$\_$dev} }
  
  \end{subfigure}
  \begin{subfigure}[t]{0.99\linewidth}\vskip 0pt
  \includegraphics[width=\textwidth,clip]{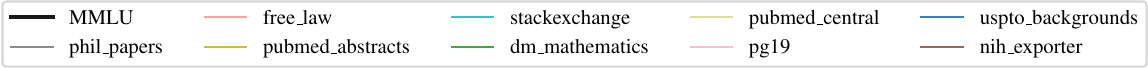}
  \caption{Top 10 upweighted distributions}
  
  \end{subfigure}\hfill
\caption{The top row presents the specific loss over time, with the two bottom rows illustrating the evolution of domain (dist.) weights from DGA correspondingly, with each line representing a distinct domain. 
}
 \label{app:fig:dr_histograms}
\end{figure*}

\newpage
\section{Hyperparameters}
\label{app:sec:hyperparams}

\autoref{tab:models} provides the model architectures and hyperparameters used in this paper.
\begin{table}[htbp!]
\caption{Architecture hyperparameters for various model scales used in the paper. All models are vanilla Transformer decoder-only models.\label{tab:models}}
\label{tab:archictectures}
\centering
\vspace{5pt}
\begin{adjustbox}{max width=0.9\textwidth}
\begin{tabular}{lcccccc}
\toprule
     & Layers & Attention heads & Embed dim & Hidden dim & Context limit & learning rate \\
     \midrule
125M & 12      & 12              & 768       & 3072  & 1024   & $1\times 10^{-4}$  \\     
350M & 24     & 16              & 1024       & 4096  & 1024  & $1\times 10^{-4}$ \\
750M & 36     & 20              & 1280       & 5120  & 1024  & $1\times 10^{-4}$  \\
\bottomrule
\end{tabular}
\end{adjustbox}
\end{table}

\section{DGA for Distribution Reweighting}
\label{app:sec:distribution_reweighting_algorithm}
\autoref{alg:dga_distribution} explains the distribution reweighting with DGA. The implementation can be easily adapted from domain reweighting \textsc{DGA} with minor modifications.
\begin{algorithm}[ht!]\label{app:alg:dist_reweight}
   \caption{Distribution Reweighting w. DGA. \small (Difference from domain reweighting are marked in \textcolor{DarkBlue}{blue})}
   \label{alg:dga_distribution}
\begin{algorithmic}[1]
   \State {\bfseries Input:} Generic domains $D_1, \dots, D_k$, I.S. distributions $\mathcal{A}_{dist}\triangleq[\vp_1,\dots, \vp_N]$, specific set $D_{\mathrm{spe}}$, inner optimizer state $\vomega^0$, optimizer function $\texttt{Optimizer}$ such as Adam or SGD, initial weights $\valpha^0$, outer learning rate $\eta$, weight update frequency $T_r$
   \vspace{0.2em}
   \State \textbf{Initialize \textcolor{DarkBlue}{distribution weights}}: $\valpha_{\mathrm{dist}}^0=\valpha^0$, i.e. init. \textcolor{DarkBlue}{domain weights}: $\valpha_{\mathrm{domain}}^0=\valpha_{\mathrm{dist}}^0 \otimes \mathcal{A}_{dist}$.
   \For{$t = 0 \dots T$}
        \vspace{0.2em}
        \State Sample batch from generic mixture: $\vx \sim \mathrm{mix}(\valpha_{\mathrm{domain}}^t)$
        \vspace{0.2em}
        \State Update the parameters $\vtheta^{t+1}, \vomega^{t+1} \leftarrow \texttt{Optimizer}(\vtheta^t, \vomega^t, \nabla_{\vtheta} \ell(\vtheta^t, \vx))$
        \vspace{0.2em}
        \If{$t \% T_r = 0$}
            \vspace{0.2em}
            \State Sample a batch from each \textcolor{DarkBlue}{\emph{distribution}}: \textcolor{DarkBlue}{$\vx_i\sim \mathrm{mix} (\vp_i)$ for $i=1\dots N$} and $\vy \sim D_{\mathrm{spe}}$
            \vspace{0.2em}
            \State Compute gradient alignements $\va^t_i\leftarrow \langle \nabla \ell(\vtheta^{t+1}, \vx_i), \nabla \ell'(\vtheta^{t+1}, \vy)\rangle$
            \vspace{0.2em}
            \State Update \textcolor{DarkBlue}{\emph{distribution weights}}: \textcolor{DarkBlue}{$\valpha_{\mathrm{dist}}^{t+1} \leftarrow\frac{\hat{\valpha}}{\sum_{i=1}^k \hat{\valpha}_i}$} with $\hat{\valpha} = \valpha_{\mathrm{dist}}^t\odot \exp(-\eta \va^t)$, 
            \vspace{0.2em}
            \State Updated \textcolor{DarkBlue}{\emph{domain weights}}: $\valpha_{\mathrm{domain}}^{t+1}=\valpha_{\mathrm{dist}}^{t+1} \otimes \mathcal{A}_{dist}$.
            \vspace{0.2em}
        \Else{}
            \vspace{0.2em}
            \State Do nothing: $\valpha_{\mathrm{dist}}^{t+1} \leftarrow \valpha_{\mathrm{dist}}^{t}$
            \vspace{0.2em}
        \EndIf
   \EndFor
   \State \textbf{Return} Optimized parameters $\vtheta^{(T)}$ and weights trajectory $\valpha^t, t=0\dots T$
\end{algorithmic}
\end{algorithm}
\end{document}